\newcommand{\PreserveBackslash}[1]{\let\temp=\\#1\let\\=\temp}
\newcolumntype{C}[1]{>{\PreserveBackslash\centering}p{#1}}
\newcolumntype{R}[1]{>{\PreserveBackslash\raggedleft}p{#1}}
\newcolumntype{L}[1]{>{\PreserveBackslash\raggedright}p{#1}}
\newtheorem{lemma}{Lemma}
\newtheorem{theorem}{Theorem}
\newtheorem{proof}{Proof}
\begin{document}
\title{Enhancing QoS in Edge Computing through Federated Layering Techniques: A Pathway to Resilient AI Lifelong Learning Systems
}

\author{\IEEEauthorblockN{
Chengzhuo Han\IEEEauthorrefmark{1}\IEEEauthorrefmark{2}, 
} \\
\IEEEauthorblockA{\IEEEauthorrefmark{1}  School of Cyber Science and Engineering, Southeast University, China 
\IEEEauthorrefmark{2} Peng Cheng Laboratory, China \\
\IEEEauthorrefmark{
3}  Navigation College, Dalian Maritime University, China \\
\\
\{hcz\_dmu@163.com\}}

}


\maketitle
\begin{abstract}

In the context of the rapidly evolving information technology landscape, marked by the advent of 6G communication networks, we face an increased data volume and complexity in network environments. This paper addresses these challenges by focusing on Quality of Service (QoS) in  edge computing frameworks. We propose a novel approach to enhance QoS through the development of General Artificial Intelligence Lifelong Learning Systems, with a special emphasis on Federated Layering Techniques (FLT). Our work introduces a federated layering-based small model collaborative mechanism aimed at improving AI models' operational efficiency and response time in environments where resources are limited. This innovative method leverages the strengths of cloud and edge computing, incorporating a negotiation and debate mechanism among small AI models to enhance reasoning and decision-making processes. By integrating model layering techniques with privacy protection measures, our approach ensures the secure transmission of model parameters while maintaining high efficiency in learning and reasoning capabilities. The experimental results demonstrate that our strategy not only enhances learning efficiency and reasoning accuracy but also effectively protects the privacy of edge nodes. This presents a viable solution for achieving resilient large model lifelong learning systems, with a significant improvement in QoS for edge computing environments.

\end{abstract}

\begin{IEEEkeywords}
Small model collaborative mechanism, Federated learning, Privacy Protection, AI Model Transmission
\end{IEEEkeywords}

\IEEEpeerreviewmaketitle

\section{Introduction}\label{sec:introduction}

In the wake of rapid advancements in technology and the emergence of 6G communication networks \cite{AI6GTaxonomy2023,nguyen2021federated}, we are transitioning into an era marked by an exponential increase in data and networking capabilities. This new age not only demands more sophisticated information processing, data transmission, and intelligent applications but also presents novel challenges. Amidst this evolution, the role of AI models in edge computing is becoming increasingly critical, particularly in their ability to process complex data structures and adapt to a wide array of application scenarios, from smart infrastructure to real-time analytics \cite{chen2023bigAI6G}.

However, the path is fraught with obstacles, including volatile learning environments, limited resources, security vulnerabilities, and the diverse nature of edge devices, all of which can undermine the efficiency and reliability of AI systems. To navigate these challenges, this paper introduces a federated layering approach to facilitate small model collaboration, specifically designed to augment the operational efficiency and responsiveness of AI models within the constrained confines of edge computing \cite{AIGC6GEra2023}. Leveraging a cloud-edge collaborative framework, this strategy harnesses the complementary strengths of cloud and edge computing to not only boost the performance of AI models but also refine the decision-making processes through a dynamic interplay of negotiation and debate among smaller, distributed AI entities \cite{AIPowered6G2023}.

Central to our investigation is the development of resilient systems capable of maintaining consistent performance amidst the fluctuating dynamics of network environments and emerging security threats \cite{AI6GVision2023,rauniyar2023federated}. By embracing cloud-edge synergy, our architecture achieves greater efficiency and adaptability in processing vast datasets. The incorporation of a negotiation and debate mechanism among small AI models introduces an innovative layer, fostering a collaborative environment where models can exchange insights and experiences while preserving their unique strengths, thereby amplifying the system's collective intelligence, adaptability, and resilience \cite{6GIoTAdvances2023}.

Given the paramount importance of data privacy and security in this context, our discourse extends to exploring robust data protection strategies for AI models within the edge computing paradigm \cite{EmergingTech6G2023}. We propose a secure parameter transmission mechanism that integrates Federated Layering Techniques (FLT) with advanced privacy measures, effectively ensuring the integrity and confidentiality of data during transmission. This approach not only fortifies data security but also underpins the resilience of the entire system \cite{AIApplications6G2023}.

Through empirical analysis, we demonstrate significant advancements in learning efficiency, inference accuracy, and the safeguarding of privacy and security at the edge, thereby offering a pragmatic blueprint for constructing resilient, large-scale lifelong learning systems. These insights pave the way for future innovations in technology, particularly in bolstering system resilience and security amidst evolving environmental conditions.

The ensuing sections will delve into our research methodology, empirical findings, and overarching conclusions. Our exploration not only addresses contemporary technological hurdles but also charts new trajectories for the advancement of communication technologies and AI, with a particular focus on enhancing QoS in edge computing. These contributions, both theoretical and practical, are poised to influence the broader trajectory of information technology development, especially as 6G networks continue to proliferate and integrate into various sectors, catalyzing societal and technological transformations.

The key contributions of this paper are encapsulated as follows:
\begin{itemize}
    \item \textbf{Federated Layering for Small Model Collaboration:} We unveil a federated layering technique that enables small AI models to collaborate efficiently within edge computing environments, optimizing AI operations and decision-making through a unique inter-model negotiation and debate framework, thereby enhancing computational efficiency and adaptability to complex scenarios.
    \item \textbf{Synergistic Cloud-Edge Architecture:} Our proposed architecture amalgamates the computational prowess of cloud computing with the real-time capabilities of edge computing, significantly improving operational efficiency and flexibility for AI models, especially under scenarios of high data volume and critical latency requirements, making it indispensable for time-critical applications.
    \item \textbf{Enhanced Privacy and Secure Transmission:} In response to the escalating concerns around data privacy, our approach integrates federated layering with stringent privacy protections, ensuring secure parameter transmission during the AI learning and reasoning processes, thereby safeguarding data privacy and system security, particularly in contexts involving sensitive information.
\end{itemize}

\section{System Model}\label{sec:system model}
The system model presented herein is formulated to mitigate the intricacies arising from the rapid evolution of information technology within the burgeoning era of 6G communication networks. The model is intricately composed, incorporating avant-garde methodologies in distributed deployment, collaborative mechanisms, and privacy-preserving security, thus contributing significantly to the formulation of resilient and efficient communication large models. Let us delve into the mathematical formulations underpinning the key components of this sophisticated system model:

\subsection{Distributed Deployment Architecture}
The Distributed Deployment Architecture serves as the foundation of our proposed system model, illustrated in Fig. \ref{mode}. This architecture is carefully crafted to enhance computational efficiency and minimize latency, achieved through strategic task distribution between cloud and edge computing environments. The following discussion delves into the detailed mathematical framework that underlies this architecture, shedding light on its sophisticated design principles.

\subsubsection{Objective Function Formulation}
The overarching objective is to minimize the combined cost of computation at the cloud \( C \) and edge \( E \) while considering the latency associated with data transmission. This objective is mathematically formulated as:
\begin{equation} \text{minimize}_{C,E} \text{Cost}(C) + \text{Cost}(E) + \text{Latency}(C, E) \end{equation}
Here, \( \text{Cost}(C) \) and \( \text{Cost}(E) \) denote the computational costs at the cloud and edge, respectively, and \( \text{Latency}(C, E) \) represents the latency incurred during data transmission between the cloud and edge.

\subsubsection{Computational Cost Formulation}
The computational cost at the cloud and edge involves the processing of tasks denoted as \( T \). Let \( \text{Comp}(C, T) \) and \( \text{Comp}(E, T) \) represent the computational costs at the cloud and edge for task set \( T \). The total computational cost is given by:
\begin{equation} \text{Cost}(C) = \sum_{T} \text{Comp}(C, T) \text{ and } \text{Cost}(E) = \sum_{T} \text{Comp}(E, T) \end{equation}

\subsubsection{Latency Formulation}
The latency during data transmission between cloud and edge is influenced by the amount of data transmitted denoted as \( D \). Considering the transmission rate \( R \) and the distance \( D \) between cloud and edge, the latency is defined as:
\begin{equation} \text{Latency}(C, E) = \sum_{D} \frac{D}{R} + \frac{D}{R} \end{equation}

\subsubsection{Optimization Problem}
Combining the computational cost and latency formulations, the optimization problem becomes:
\begin{equation} \text{minimize}_{C,E} \sum_{T} (\text{Comp}(C, T) + \text{Comp}(E, T)) + \sum_{D} \frac{2D}{R} \end{equation}
Subject to resource constraints and data transmission limitations between cloud and edge.

\subsubsection{Resource Constraints}
The computational resources at the cloud and edge are constrained, represented as \( R_C \) and \( R_E \) respectively. The constraints are formulated as:
\begin{equation}\text{Comp}(C, T) \leq R_C \text{ and } \text{Comp}(E, T) \leq R_E \end{equation}

\subsubsection{Data Transmission Limitations}
The amount of data transmitted between cloud and edge is constrained by the available bandwidth denoted as \( B \). The constraint is expressed as:
\begin{equation} \sum_{D} D \leq B \end{equation}

This detailed formulation of the Distributed Deployment Architecture encapsulates the intricate mathematical relationships governing the allocation of tasks, computational costs, and latency considerations in the optimization process, contributing to the overall resilience and efficiency of the system model.

\begin{figure}
  \centering
  \includegraphics[scale = 0.6]{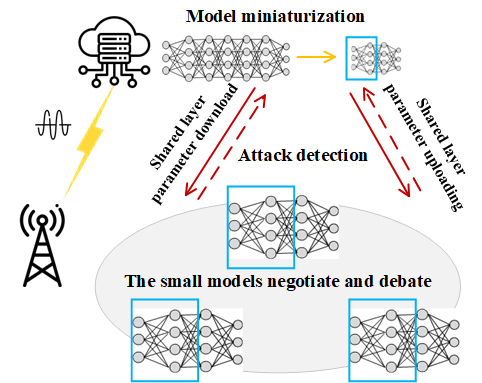}
  \caption{6G communication networks architecture based on FLT
}\label{mode}
\end{figure}

\subsection{Collaborative Mechanism among Small AI Models}
The Collaborative Mechanism is a pivotal component of the proposed system, fostering synergy among small AI models to enhance operational efficiency. Let's embark on a mathematical exploration of the intricacies of this collaborative process:

\subsubsection{Knowledge Sharing}
The core idea is to facilitate knowledge sharing among small AI models denoted as \( M \). Let \( \text{K}(M_i, M_j) \) represent the knowledge shared from model \( M_j \) to model \( M_i \). The collaborative mechanism encourages models to exchange knowledge, leading to an enriched collective learning experience:
\begin{equation} \text{K}(M_i, M_j) = \alpha \cdot \text{Experience}(M_j) + \beta \cdot \text{Expertise}(M_j) \end{equation}
Here, \( \alpha \) and \( \beta \) are weights assigned to the experience and expertise components, respectively.

\subsubsection{Learning Enhancement}
The collaborative mechanism aims to enhance the learning capabilities of small AI models through collaborative learning. The learning enhancement factor \( \Lambda(M_i) \) for model \( M_i \) is formulated as a function of the accumulated knowledge from other models:
\begin{equation} \Lambda(M_i) = \sum_{M_j} \text{K}(M_i, M_j) \end{equation}
This collaborative learning process contributes to the overall improvement of individual models within the collaborative ecosystem.

\subsubsection{Decision Optimization}
In decision-making scenarios, the collaborative mechanism involves models engaging in negotiation and debate. Let \( \text{Decision}(M_i) \) represent the decision made by model \( M_i \). The collaborative decision is influenced by the consensus reached during negotiation:
\begin{equation} \text{Decision}(M_i) = \frac{1}{N} \sum_{M_j} \text{Negotiate}(M_i, M_j) \end{equation}
Here, \( \text{Negotiate}(M_i, M_j) \) denotes the negotiation outcome between models \( M_i \) and \( M_j \), and \( N \) is the total number of models involved in the collaboration.

\subsubsection{Overall Collaboration Objective}
The overarching objective of the collaborative mechanism is to optimize the overall performance of the system. The collaborative objective function is defined as:
\begin{equation} \max \left\{ \sum_{M_i} \Lambda(M_i) + \sum_{M_i} \text{Decision}(M_i) \right\} \end{equation}
Subject to constraints that ensure fair collaboration, prevent information imbalance, and address resource limitations for individual models.

This mathematical elucidation provides a comprehensive understanding of the Collaborative Mechanism among Small AI Models, shedding light on the intricacies of knowledge sharing, learning enhancement, and decision optimization within the collaborative ecosystem.

\subsection{Privacy-Preserving Parameter Security Mechanism}

The PPPSM ensures the confidentiality of model parameters during transmission. Below we present the simplified mathematical foundations of this security measure.

\subsubsection{Secure Transmission Protocol}
The mechanism utilizes a cryptographic protocol for secure transmission:
\begin{equation} 
\text{SecTrans}(M_i, M_j) = \text{Enc}(\text{Params}(M_i), \text{Key}(M_j)) 
\end{equation}
Here, \(\text{Enc}\) denotes the encryption process using the recipient model's key.

\subsubsection{Privacy-Preserving Objective}
The goal is to enhance privacy-preserving security:
\begin{equation} 
\max \sum_{M_i} \text{Sec}(M_i) 
\end{equation}
This is subject to constraints for transmission efficiency and model accuracy.

\subsubsection{Security Metric}
The security metric, \(\text{Sec}\), combines encryption strength and transmission efficiency:
\begin{equation} 
\text{Sec}(M_i) = \gamma \cdot \text{EncStr}(M_i) + (1 - \gamma) \cdot \text{TransEff}(M_i) 
\end{equation}
The parameter \(\gamma\) represents the balance between encryption strength and efficiency.

\subsubsection{Homomorphic Encryption}
The mechanism adopts homomorphic encryption for computations on encrypted data:
\begin{equation} \begin{array}{l}
{\rm{HomEnc}}({M_i}) = {\rm{Dec}}({\rm{Comp}}({\rm{Enc}}({\rm{Params}}({M_i}),\\
{\rm{Key}}({M_j})),{\rm{Key}}({M_i})))
\end{array}
\end{equation}
Homomorphic encryption (\(\text{HomEnc}\)) maintains confidentiality during computations.

This section highlights the essential aspects of the PPPSM, focusing on secure transmission, encryption balance, and homomorphic encryption in ensuring data security.

\subsection{Federated Layering Techniques for Lifelong Learning Systems}
\subsubsection{Layered Model Representation}
Each model \(M_i\) in the federated system is represented as a layered structure:
\begin{equation} M_i = \{L_{i1}, L_{i2}, ..., L_{in}\} \end{equation}
Here, \(L_{ij}\) denotes the \(j\)-th layer in the \(i\)-th model.

\subsubsection{Federated Model Aggregation}
The aggregation of models across layers is expressed as:
\begin{equation} \text{Aggr}(L_j) = \frac{1}{N} \sum_{i=1}^{N} L_{ij}\end{equation}
This ensures a collaborative learning process where layers from different models contribute to the overall system knowledge.

\subsubsection{Resilience Objective}
The objective is to maximize the resilience of the federated system:
\begin{equation} \max \left\{ \sum_{j=1}^{n} \text{Resi}(\text{Aggr}(L_j)) \right\} \end{equation}
Subject to constraints ensuring balanced learning across models and layers.

\subsubsection{Resilience Metric}
The resilience metric (Resilience) is defined as a function of layer-wise learning stability:
\begin{equation} \text{Resi}(L_j) = \beta \cdot \text{Lear}(L_j) + (1 - \beta) \cdot \text{Adap}(L_j) \end{equation}
Here, \(\beta\) modulates the trade-off between learning stability and adaptability.

\subsubsection{Layer-wise Adaptability}
Adaptability of each layer is expressed as:
\begin{equation} \text{Adap}(L_j) = \text{UpdateRate}(L_j) \times \text{Comp}(L_j)\end{equation}
Layer-wise adaptability ensures that the system dynamically adjusts to new information while maintaining compatibility.

This chapter unveils the FLT, shedding light on the layered model representation, federated aggregation, and the nuanced interplay of resilience metrics. The expressive power of these techniques lies in their ability to foster collaborative and resilient learning across diverse models within Lifelong Learning Systems.

\subsection{Scheme Assumptions}
	According to the scenario and security objective proposed by this scheme, we firstly assume that the training set and test set are identically distributed \cite{han2022out}, and that the data distributions of different data parties are different but have some similarity. This allows each party to train its own personalized model and ensures that different data parties can learn from each other.
Secondly, we assume that the edge nodes are able to collect a portion of data with strong security and make a dection data set with them. The data can be derived from the edge nodes or from the terminal devices that have been authenticated or security verified. 
These data do not need to be large as they are only used to detect anomalies in the uploaded model parameters.
This reduces the contribution of untrustworthy parameters and ensures the model's accuracy. With an aim to ensure the model's convergence, we finally suppose that the loss function for training the local model on each terminal device is
convex function. In this case, the optimal solution of the model is acquired through the gradient descent algorithm.

\section{Algorithm Design}\label{sec:algorithm}
The algorithm design for the proposed FLT for Lifelong Learning Systems encapsulates the intricate process of collaborative and resilient learning. Let's delve into the details of the algorithmic framework.
\subsection{Algorithm Initialization}
The initialization phase of the Federated Layering Techniques for Lifelong Learning Systems (FLT) algorithm involves setting up the models and parameters.

\subsubsection{Model Initialization}
\begin{itemize}
    \item \textbf{Initialize Layered Models:} Define \( N \) models, each with a layered structure denoted as \( M_i = [L_{i1}, L_{i2}, ..., L_{im}] \), where \( L_{ij} \) represents the \( j \)-th layer of model \( M_i \).
    \item \( M_i = [L_{i1}, L_{i2}, ..., L_{im}], \forall i \in [1, N] \)
\end{itemize}

\subsubsection*{Parameter Initialization}
\begin{itemize}
    \item \textbf{Initialize Learning Rates:} Set the initial learning rates for each layer of each model. Let \( LR_{ij} \) represent the learning rate for layer \( j \) of model \( i \).
    \item \( LR_{ij} \sim \text{Uniform}(LR_{\text{min}}, LR_{\text{max}}), \forall i, j \)
    \item \textbf{Initialize Adaptability Coefficients:} Assign initial adaptability coefficients to control the adaptability of each layer. Let \( \text{Adapt}_{ij} \) represent the adaptability coefficient for layer \( j \) of model \( i \).
    \item \( \text{Adapt}_{ij} \sim \text{Uniform}(\text{Adapt}_{\text{min}}, \text{Adapt}_{\text{max}}), \forall i, j \)
    \item \textbf{Initialize Compatibility Thresholds:} Define compatibility thresholds to regulate collaboration among layers. Let \( \text{Comp}_{ij} \) denote the compatibility threshold for layer \( j \) of model \( i \).
    \item \( \text{Comp}_{ij} \sim \text{Uniform}(\text{Comp}_{\text{min}}, \text{Comp}_{\text{max}}), \forall i, j \)
\end{itemize}

The model and parameter initialization sets the foundation for the subsequent collaborative learning process.

\subsection{Algorithm Process}
The FLT algorithm consists of iterative training and collaboration phases in Fig. \ref{fig:liuchengtu}.
\begin{figure*}
  \centering
  \includegraphics[scale = 0.5]{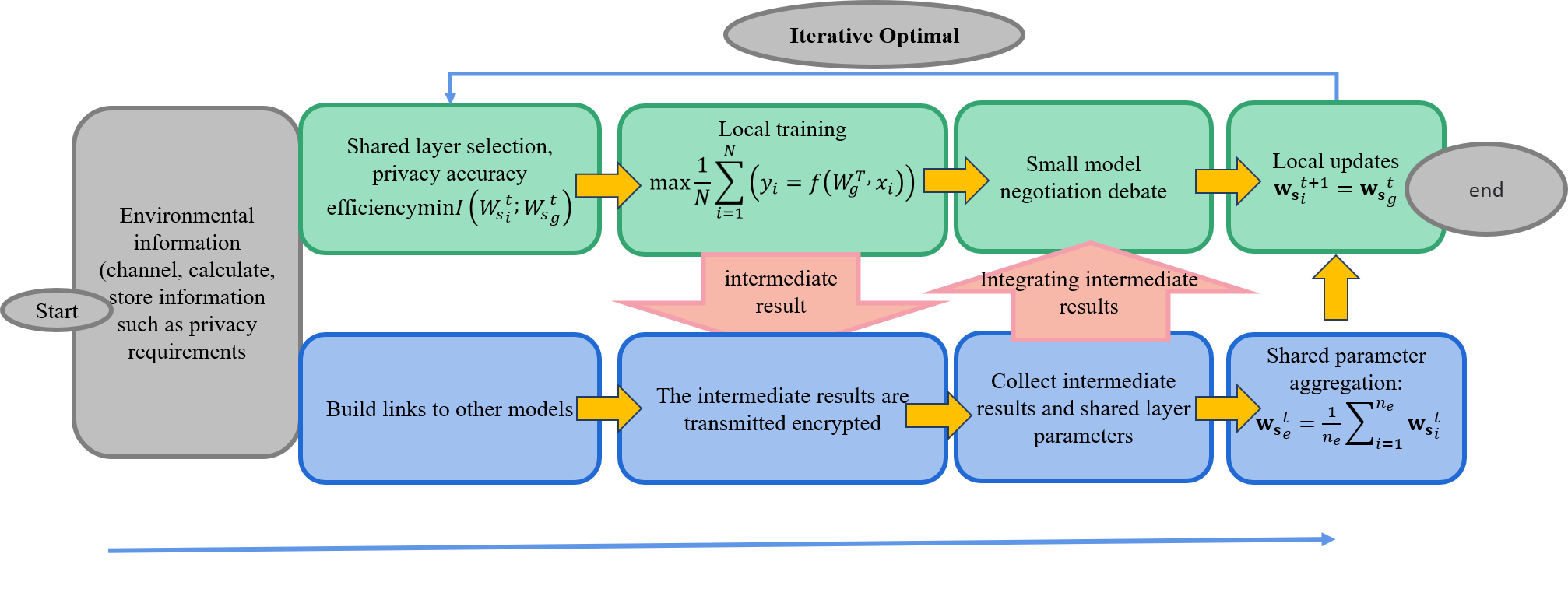}
  \caption{6G communication networks architecture based on FLT
}\label{fig:liuchengtu}
\end{figure*}
\subsubsection{Training Iterations as Algorithm \ref{alg:Initialization}}
\begin{itemize}
    \item \textbf{Forward Pass:} For each model \( M_i \), perform a forward pass to obtain predictions \( \hat{Y}_i \).
    \item \( \hat{Y}_i = \text{Forward}(M_i, X), \forall i \)
    \item \textbf{Calculate Loss:} Compute the loss \( L_i \) comparing predictions \( \hat{Y}_i \) with ground truth \( Y \).
    \item \( L_i = \text{Loss}(\hat{Y}_i, Y), \forall i \)
    \item \textbf{Backward Pass:} Conduct a backward pass to calculate gradients and update model parameters.
    \item \( \nabla L_i = \text{Backward}(M_i, \nabla L_i), \forall i \)
    \item \textbf{Update Parameters:} Adjust model parameters using the calculated gradients and learning rates.
    \item \( M_i = \text{Update}(M_i, \nabla L_i, LR_{ij}), \forall i, j \)
\end{itemize}

\subsubsection{Collaboration Phase}
The comprehensive algorithm for detecting node anomalies is detailed in Algorithm \ref{alg:FLTTrain}.
\begin{itemize}
    \item \textbf{Layer Collaboration:} Evaluate the compatibility of layers and initiate collaboration.
    \item \( \text{Collaborate}(M_i, M_j, \text{Comp}_{ij}), \forall i, j \) s.t. \( i \neq j \)
    \item \textbf{Negotiation and Debate:} Engage in negotiation and debate mechanisms to share knowledge.
    \item \( \text{NegotiateDebate}(M_i, M_j), \forall i, j \) s.t. \( i \neq j \)
\end{itemize}

The collaborative learning process enhances the adaptability and robustness of the models, continuing until convergence.
\subsection{Convergence Analysis}
To establish the convergence of the FLT for Lifelong Learning Systems algorithm, we employ mathematical formulations and principles from optimization theory. The objective is to demonstrate that the algorithm converges to a stable solution over training iterations.

\subsubsection{Notation}
\begin{itemize}
    \item \( M_i \): Small AI model \( i \).
    \item \( L_i \): Loss function for model \( M_i \).
    \item \( \theta_i \): Parameters of model \( M_i \).
    \item \( \nabla L_i \): Gradient of the loss function \( L_i \) with respect to \( \theta_i \).
    \item \( LR_{ij} \): Learning rate for updating model \( M_i \) using model \( M_j \).
    \item \( \text{Comp}_{ij} \): Compatibility measure between layers of models \( M_i \) and \( M_j \).
\end{itemize}

\subsubsection{Convergence Theorem}
\textbf{Theorem:} The FLT algorithm converges to a stationary point.

\textbf{Proof Outline:}
\begin{enumerate}
    \item \textbf{Objective Function Bound:} Demonstrate that the loss function \( L_i \) is bounded for all models \( M_i \) during training.
    \begin{equation} 0 \leq L_i \leq B, \forall i \end{equation}
    \item \textbf{Gradient Bound:} Show that the gradients \( \nabla L_i \) are bounded.
    \begin{equation} \| \nabla L_i \| \leq G, \forall i \end{equation}
    \item \textbf{Learning Rate Conditions:} Confirm that learning rates \( LR_{ij} \) satisfy certain conditions.
    \begin{equation} \sum_{i,j} LR_{ij} < \infty, \sum_{i,j} (LR_{ij})^2 < \infty \end{equation}
    \item \textbf{Layer Compatibility Conditions:} Establish conditions for layer compatibility \( \text{Comp}_{ij} \) ensuring collaboration does not destabilize the training.
    \begin{equation} \text{Comp}_{ij} \leq \text{Threshold}, \forall i, j \text{ s.t. } i \neq j \end{equation}
    \item \textbf{Convergence Criteria:} Utilize the Lyapunov function to demonstrate that the objective function decreases over time.
    \begin{equation} V(t) = \sum_i L_i(t) \text{ with } V'(t) \leq -\epsilon V(t) \end{equation}
\end{enumerate}

The convergence theorem establishes that the algorithm reaches a stationary point where the loss functions stabilize. The proof involves bounding the loss and gradients, ensuring suitable learning rates, and imposing compatibility conditions for effective collaboration. Together, these conditions guarantee the convergence of the FLT algorithm, providing a solid theoretical foundation for its effectiveness in training resilient lifelong learning systems.

\begin{algorithm}[]
    \caption{Initialization and Training of FLT Algorithm}
\label{alg:Initialization}
    \KwIn{learning rate $\alpha$, number of clients $N$, number of epochs $E$}
    \KwOut{Trained global model $global\_model$, Losses $losses$}

    Initialize $learning\_rate \gets 0.01$\;
    Initialize $num\_clients \gets 10$\;
    Initialize $epochs \gets 100$\;

    Initialize global model $global\_model \gets Initialize\_Global\_Model{}$\;
    Initialize local models array $local\_models \gets []$\;
    \For{$i \gets 1$ \textbf{to} $num\_clients$}{
        $local\_models[i] \gets Initialize\_Global\_Model{}$\;
    }
    Initialize losses list $losses \gets []$\;

    \For{$epoch \gets 1$ \textbf{to} $epochs$}{
        Update global model $global\_model \gets FederatedAverage{local\_models}$\;
        \For{$n \gets 1$ \textbf{to} $num\_clients$}{
            Update local model and calculate loss $(local\_model[n], loss)\gets Train\_Local\_Model\{local\_models[n], global\_model, \alpha\}$\;
            Append $loss$ to $losses$\;
        }
    }
\end{algorithm}
\begin{algorithm}[]
    \caption{Federated Layering Techniques Training Algorithm}
    \label{alg:FLTTrain}
    \LinesNumbered
    \KwIn{global model $global\_model$, local models $local\_models$, learning rate $\alpha$, number of epochs $E$}
    \KwOut{Trained global model $global\_model$, Losses $losses$}

    Initialize $losses \gets []$\;
    \For{$epoch \gets 1$ \textbf{to} $E$}{
        Update global model $global\_model \gets FederatedAverage(local\_models)$\;
        \For{$client\_model$ \textbf{in} $local\_models$}{
            Update local model and calculate loss $(client\_model, loss) \gets TrainLocalModel(client\_model, global\_model, \alpha)$\;
            Append $loss$ to $losses$\;
        }
    }
    \Return{$global\_model$, $losses$}\;

    \SetKwProg{Fn}{Function}{:}{end}
    \Fn{FederatedAverage}{$models$}{
        Compute $global\_model\_params \gets AverageParams(models)$\;
        Set $global\_model$ parameters using $global\_model\_params$\;
        \Return{$global\_model$}\;
    }
    \textbf{End Function}

    \Fn{TrainLocalModel}{$local\_model$, $global\_model$, $\alpha$}{
        Set $local\_model$ parameters using $global\_model$ parameters\;
        Compute $loss \gets FederatedLayeringTraining(local\_model, \alpha)$\;
        \Return{$local\_model$, $loss$}\;
    }
    \textbf{End Function}

    \Fn{FederatedLayeringTraining}{$model$, $\alpha$}{
        Compute $layer\_outputs \gets model.FederatedForward()$\;
        Compute $layer\_losses \gets ComputeLayerLosses(layer\_outputs)$\;
        Update $model$ using $layer\_losses$ and $\alpha$\;
        Update $model$ global parameters\;
        Compute $total\_loss \gets Sum(layer\_losses)$\;
        \Return{$total\_loss$}\;
    }
    \textbf{End Function}
\end{algorithm}
\subsection{Implementation of the FLT Algorithm for Anomaly Detection}

The process of anomaly detection in our federated system is depicted in Fig. \ref{workflow}. This figure outlines the sequence of steps in the detection scheme.

\begin{figure}[htbp]
    \centering
    \includegraphics[scale = 0.25]{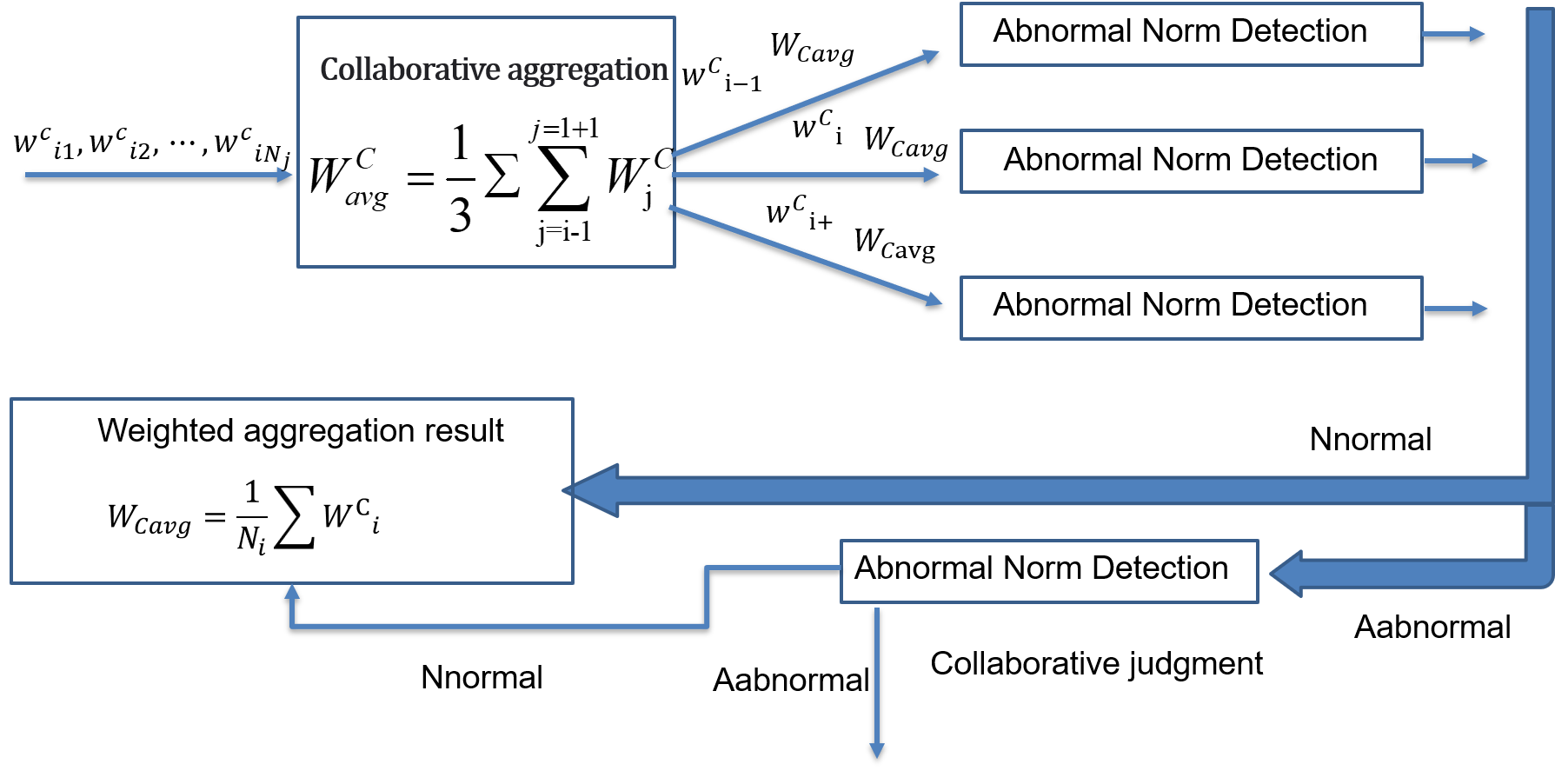}
    \caption{The Workflow of Anomaly Detection Process}\label{workflow}
\end{figure}

Each node, labeled as \(Node_x\), collects aggregated layer weights \(p_{xj}^a\) from device \(D_{xj}\). The process begins with averaging these weights to calculate the average weight \(p_{mean}^a\) as shown in Equation \ref{eq_bench}:

\begin{equation}
    p_{mean}^a = \frac{1}{{M_x}}\sum_{j = 1}^{M_x} p_{xj}^a  \label{mean_agg}
\end{equation}

Next, we create a benchmark \(p_{xj}^{a'}\) for each parameter \(p_{xj}^a\), using the mean value \(p_{mean}^b\) for comparison:

\begin{equation}
    p_{xj}^{a'} = \frac{1}{{M_x - 1}}\left( \sum_{k = 1}^{M_x} p_{xk}^a - p_{xj}^a \right) = \left( p_{mean}^a - \frac{p_{xj}^a}{{M_x}} \right) * \frac{{M_x}}{{M_x - 1}} \label{eq_bench}
\end{equation}

The variance distance \(v_{xj}^a\) between the input parameter \(p_{xj}^a\) and the benchmark \(p_{xj}^{a'}\) is determined:

\begin{equation}
    v_{xj}^a = \| p_{xj}^a - p_{xj}^{a'} \|
\end{equation}

The mean of \(v_{xj}^a\) across all nodes is then calculated:

\begin{equation}
    v_{x_{mean}}^a = \frac{1}{{M_x}}\sum_{j = 1}^{M_x} v_{xj}^a
\end{equation}

Similarly, we compute \(v_{xj}^{a'}\) as the aggregated result excluding \(v_{xj}^a\):

\begin{equation}
    v_{xj}^{a'} = \frac{1}{{M_x - 1}}\left( \sum_{k = 1}^{M_x} v_{xk}^a - v_{xj}^a \right) = \left( v_{x_{mean}}^a - \frac{v_{xj}^a}{{M_x}} \right) * \frac{{M_x}}{{M_x - 1}}
\end{equation}

Anomalies are flagged based on the deviation of \(v_{xj}^a\) from a predefined threshold \(\theta\). If \(v_{xj}^a\) surpasses \(\theta v_{x_{mean}}^a\), the parameter is marked as suspicious.

To evaluate accuracy, private layer weights \(q_{xj}^p\) are uploaded by \(D_{xj}\). The weights \(p_{xj}^{a'}\) and \(p_{xj}^a\) are combined with \(q_{xj}^p\) to form complete model parameters \(q_{xj}^{e'}\) and \(q_{xj}^{e}\), respectively. Training on a local dataset \(data_x^{test}\), we calculate the prediction accuracies for various categories. The difference in accuracy is quantified by \(\frac{{\phi _{xj}^k - \phi _{xj}^{k'}}}{{\phi _{xj}^{k'}}}\), where \(\phi _{xj}^k\) and \(\phi _{xj}^{k'}\) denote the category-specific accuracies.

A benchmark \(\beta\) is established to mark any device as a potential source of poisoning if the accuracy difference \(e\) exceeds \(\beta\). 
\subsection{Model Variance and Detection Effectiveness}
\label{sec:analysis}

This section offers formal rationale for the high efficacy of our proposed anomaly detection approach in identifying FL poisoning attacks in edge networks, regardless of the proportion of malicious actors and the presence of coordinated poisoning attacks.

We first establish that the variance in parameters transmitted from each terminal during FL training is lower for common layers compared to private layers. We then validate the high efficiency of the common layer parameter detection method against poisoning attacks under a semi-honest model.

\begin{lemma}
\label{lemma1}
Assuming data distributions are similar yet distinct, if training starts with \(d_{c}^{init} = d_{ce}^{init} = d_{p}^{init}\), it will lead to \(d_{c} \leq d_{ce} \leq d_{p}\) during training. Here, \(d_{c}\) and \(d_{p}\) refer to the distances from the benchmark for common and private layer weights, respectively, and \(d_{ce}\) is the outcome with the complete layer weights of the classical FL model.
\end{lemma}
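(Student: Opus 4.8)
The plan is to formalize the intuition that common (shared) layers, which are trained collaboratively across parties with similar data distributions, exhibit tighter concentration around their aggregated benchmark than private layers, which specialize to each party's idiosyncratic distribution. First I would set up notation: let $\theta_{c,j}$, $\theta_{p,j}$ denote the common- and private-layer parameters at terminal $j$ after a given training round, with benchmarks $\bar\theta_c$, $\bar\theta_p$ formed by leave-one-out averaging as in Equation~\eqref{eq_bench}; define $d_c = \mathbb{E}_j\|\theta_{c,j}-\bar\theta_c\|$ and analogously $d_p$, $d_{ce}$. Under the scheme assumptions (convex local losses, identically distributed train/test sets, and data distributions that are ``similar yet distinct''), I would write each terminal's parameter as the output of a gradient-descent trajectory initialized at the common starting point $d_c^{init}=d_{ce}^{init}=d_p^{init}$.

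The key steps, in order, would be: (1) Express the deviation of terminal $j$'s parameter from the benchmark as an accumulation of per-step gradient differences across terminals; by convexity and $L$-smoothness the map from data distribution to the minimizer is Lipschitz, so the parameter spread is controlled by the spread of the underlying data distributions. (2) Argue that the common layers are subject to federated averaging every round (the $\mathrm{Aggr}(L_j)$ operation of Equation~\eqref{...}), which is a contraction that repeatedly pulls the $\theta_{c,j}$ back toward a shared value, whereas private layers receive no such averaging and drift freely toward their local optima; hence $d_c \le d_p$. (3) Place the classical FL model ($d_{ce}$), whose full parameter vector is averaged but whose effective objective still carries the per-terminal heterogeneity through the non-shared portions, between the two extremes, giving $d_c \le d_{ce} \le d_p$. (4) Combine the per-step bounds with the learning-rate conditions $\sum LR_{ij}<\infty$, $\sum (LR_{ij})^2<\infty$ to ensure the accumulated deviations converge and the ordering is preserved in the limit.

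I would carry out step~(1) by a telescoping/induction argument on the training iteration index: assuming the ordering $d_c(t)\le d_{ce}(t)\le d_p(t)$ holds at round $t$, show each update plus (for common and complete layers) the averaging step preserves it at round $t+1$, using that averaging is non-expansive in $\ell_2$ and that the gradient-difference term is bounded by the distributional discrepancy uniformly across layer types. The base case is immediate from $d_c^{init}=d_{ce}^{init}=d_p^{init}$.

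The hard part will be making precise why the \emph{complete}-layer distance $d_{ce}$ lies \emph{between} $d_c$ and $d_p$ rather than simply below both: one must show that bundling the private sub-parameters into the averaged vector injects exactly enough heterogeneity that $d_{ce}\ge d_c$, while the averaging still keeps $d_{ce}\le d_p$. I expect to handle this by decomposing $\|\theta_{ce,j}-\bar\theta_{ce}\|^2$ into a common-coordinate piece and a private-coordinate piece, bounding the former by $d_c^2$-type quantities and the latter by $d_p^2$-type quantities, and then invoking a weighted-average argument; the residual cross-terms will need the similarity assumption on data distributions to be sign-controlled, and that is the step most likely to require an extra mild hypothesis (e.g.\ bounded gradient dissimilarity) beyond what is stated.
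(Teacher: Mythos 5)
Your plan rests on the same core intuition as the paper's proof, but the paper's proof is far shorter and far less formal than what you propose: it consists of three qualitative sentences. The paper argues (i) that the non-IID data causes the private-layer weights $w_{p}$ to diverge because they receive no federated aggregation; (ii) that classical federated aggregation ``transfers this diversity to the parameters,'' reducing $d_{ce}$ over time, hence $d_{ce}\le d_{p}$; and (iii) that $d_{c}$ ``reflects data similarity on a smaller scale,'' hence $d_{c}\le d_{ce}$. That is the entire argument --- there is no induction over training rounds, no contraction or non-expansiveness lemma for the averaging step, no Lipschitz dependence of minimizers on data distributions, and no coordinate-wise decomposition of $\lVert\theta_{ce,j}-\bar\theta_{ce}\rVert$. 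In effect, the paper's proof is the informal motivation paragraph of your plan, presented as the proof itself. Your proposal is therefore best read as a genuine formalization rather than a different route: the induction combined with the non-expansiveness of averaging is the natural way to make steps (i) and (ii) rigorous, and your decomposition of the complete-model distance into a common-coordinate piece controlled by $d_{c}$-type quantities and a private-coordinate piece controlled by $d_{p}$-type quantities is exactly what would be needed to justify the middle placement of $d_{ce}$ --- the step the paper disposes of with the single phrase ``smaller scale.'' You are also right to flag that this middle placement is the fragile point and may require an extra hypothesis such as bounded gradient dissimilarity; the paper supplies neither such a hypothesis nor any argument for sign control of the cross-terms, so on this point your plan exposes a gap in the paper's own reasoning rather than containing one the paper has closed.
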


\begin{proof}
Given the non-IID nature of the training data, the diversity will be reflected in \(w_{p}\), causing an increase in \(w_{p}\) variance without federal aggregation. Classical federal aggregation can transfer this diversity to the parameters, reducing \(d_{ce}\) over time, hence \(d_{ce} \leq d_{p}\). Moreover, \(d_{c}\) reflects data similarity on a smaller scale, resulting in a smaller divergence, thus \(d_{c} \leq d_{ce}\).
\end{proof}

\begin{theorem}
Consider a malicious terminal creating harmful model parameters \(w_{m} + \delta_{m}\), where \(\delta_{m}\) is the deviation from normal to poisoning weights. The CLMD method will more effectively detect the poisoning device, i.e., \(\frac{|\delta_{m}^{c} - \delta_{avg}^{c'}|}{\delta_{avg}^{c'}} > \frac{|\delta_{m}^{ce} - \delta_{avg}^{ce}|}{\delta_{avg}^{ce'}} > \frac{|\delta_{m}^{p} - \delta_{avg}^{p'}|}{\delta_{avg}^{p'}}\), where \(\delta_{m}^{c}\), \(\delta_{m}^{p}\), and \(\delta_{m}^{ce}\) represent the deviation of the malicious device from the benchmark in terms of common layer weights, privacy layer weights, and traditional FL model weights, respectively.
\end{theorem}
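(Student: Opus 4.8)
The plan is to reduce the claimed chain of inequalities to Lemma~\ref{lemma1} by showing that the relative detection signal $\frac{|\delta_m - \delta_{avg}|}{\delta_{avg}}$ is a monotone function of the baseline benchmark distances $d_c, d_{ce}, d_p$. First I would fix the malicious terminal and treat $\delta_m$ (the injected poisoning deviation) as a common perturbation applied to whichever sublayer is under consideration; the key observation is that the \emph{absolute} numerator $|\delta_m^{\,\ell} - \delta_{avg}^{\,\ell'}|$ is essentially governed by the same injected norm $\|\delta_m\|$ regardless of which layer type $\ell \in \{c, ce, p\}$ we look at, because the attacker crafts the poisoning perturbation on the full model and it propagates comparably through all layers. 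Hence the ordering of the \emph{relative} deviations is controlled by the denominators $\delta_{avg}^{\,\ell'}$, which are precisely the benign benchmark distances $d_\ell$ from Lemma~\ref{lemma1}.

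The main steps, in order, would be: (i) make precise the definitions of $\delta_m^c, \delta_m^{ce}, \delta_m^p$ and of the averaged benchmark deviations $\delta_{avg}^{c'}, \delta_{avg}^{ce'}, \delta_{avg}^{p'}$ in terms of the benchmark construction from Eq.~\eqref{eq_bench} applied to the relevant weight blocks; (ii) invoke Lemma~\ref{lemma1} with the shared initialization hypothesis to get $d_c \le d_{ce} \le d_p$, and identify $\delta_{avg}^{c'} \propto d_c$, $\delta_{avg}^{ce'} \propto d_{ce}$, $\delta_{avg}^{p'} \propto d_p$ (the benign average deviation from the benchmark is exactly the layer-wise variance quantity that the lemma bounds); (iii) argue that the numerators satisfy $|\delta_m^c - \delta_{avg}^{c'}| \approx |\delta_m^{ce} - \delta_{avg}^{ce}| \approx |\delta_m^p - \delta_{avg}^{p'}|$ up to the benign-variance terms, which are lower-order relative to the attacker's injected shift under a semi-honest model; (iv) combine (ii) and (iii): dividing an (approximately) common numerator by the increasing sequence $d_c \le d_{ce} \le d_p$ reverses the order, yielding the claimed strict chain $\frac{|\delta_m^c - \delta_{avg}^{c'}|}{\delta_{avg}^{c'}} > \frac{|\delta_m^{ce} - \delta_{avg}^{ce}|}{\delta_{avg}^{ce'}} > \frac{|\delta_m^p - \delta_{avg}^{p'}|}{\delta_{avg}^{p'}}$. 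Strictness follows from the strict separation of data-similarity scales asserted in the proof of Lemma~\ref{lemma1}.

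I expect step (iii) to be the main obstacle: controlling the numerator requires arguing that the poisoning shift $\delta_m$ dominates the benign inter-terminal variance in every layer block, and that it does so \emph{uniformly} across common, complete, and private layers. This needs the convexity assumption on the local loss (so that benign gradients concentrate) together with the semi-honest threat model (so $\delta_m$ is a genuine additive corruption not adapted to evade the benchmark). A careful treatment would bound $\big|\,|\delta_m^\ell - \delta_{avg}^{\ell'}| - \|\delta_m\|\,\big| \le c\cdot d_\ell$ for a small constant $c$, and then observe that as long as $\|\delta_m\|$ exceeds the benign scale, the relative ordering is still dictated by $d_c \le d_{ce} \le d_p$; degenerate cases (e.g. $\delta_m$ aligned with the benign drift direction, or a vanishingly small attack) would be excluded by hypothesis or handled as boundary remarks. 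The remaining steps are routine given Lemma~\ref{lemma1}.
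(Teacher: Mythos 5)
Your proposal follows essentially the same route as the paper: the paper's proof asserts that the malicious terminal's deviation is identical across layer types (\(\delta_m^{c} = \delta_m^{ce} = \delta_m^{p}\), since the attacker does not genuinely participate in training), invokes Lemma~\ref{lemma1} to order the benign average deviations \(\delta_{avg}^{c} \leq \delta_{avg}^{ce} \leq \delta_{avg}^{p}\), and then divides the (common) numerator by the increasing denominators. Your steps (iii)--(iv) reproduce exactly this argument, and you are in fact more careful than the paper in flagging that the conclusion requires \(\|\delta_m\|\) to exceed the benign scale --- since \(|k-x|/x\) is decreasing in \(x\) only for \(x<k\) --- a condition the paper leaves implicit under its ``most devices are normal'' assumption.
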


\begin{proof}
As the malicious terminal does not actively participate in training, its parameter deviation from the benchmark remains constant, thus \(\delta_{m}^{c} = \delta_{m}^{ce} = \delta_{m}^{p}\). Assuming most devices are normal, Lemma \ref{lemma1} indicates \(\delta_{avg}^{c} \leq \delta_{avg}^{ce} \leq \delta_{avg}^{p}\), leading to \(\frac{|\delta_{m}^{c} - \delta_{avg}^{c}|}{\delta_{avg}^{c}} > \frac{|\delta_{m}^{ce} - \delta_{avg}^{ce}|}{\delta_{avg}^{ce}} > \frac{|\delta_{m}^{p} - \delta_{avg}^{p}|}{\delta_{avg}^{p}}\).
\end{proof}

In conclusion, our proposed method is demonstrated to be exceptionally effective in detecting anomalies.

\section{Experimental Validation}

To assess the efficacy of our proposed negotiation and debate mechanism in small AI models and the secure parameter transfer method for large-scale models, we have structured our experiments as follows:

\subsection{Experimental Configuration}

Our experimental environment comprised a server cluster equipped with state-of-the-art GPUs. TensorFlow was employed as the foundational deep learning framework. Datasets used were diverse, including ImageNet for image classification, along with OpenSubtitles and CommonCrawl for text analysis tasks.

The study was conducted in two distinct phases:

\begin{itemize}
    \item \textbf{Phase 1 - Small AI Model Interaction:} Utilizing the Transformer architecture, a baseline model was developed and then segmented into several smaller models. ImageNet, OpenSubtitles, and CommonCrawl were used for various classification tasks, focusing on small AI model interactions.
    \item \textbf{Phase 2 - Secure Parameter Transfer in Large Models:} This phase integrated a Bert-like model to test secure parameter transfer, with a focus on privacy computing. Evaluation metrics included transfer speed, encryption and decryption latency, and overall security of data transmission.
\end{itemize}

Our primary metrics for evaluation included model accuracy, rate of convergence, and the efficiency of computational and communicational processes, providing a comprehensive understanding of both the small AI model interaction and the secure data transfer mechanism in larger models.

\subsection{Performance Analysis of Collaborative Computing with Small Models}

In this research, we delved into the effects of various model configurations and parameter transmission strategies on model accuracy and privacy. Our findings, illustrated in Fig. \ref{F_1}, reveal how accuracy evolves across different collaborative model setups over a span of 20 training rounds. A notable trend observed is the consistent improvement in accuracy across all configurations, with the rate of improvement accelerating as the degree of collaboration among models increases. Remarkably, the configuration involving four models working collaboratively demonstrated the most significant accuracy enhancement, surpassing the three, two, and single model setups in sequence. In Fig. \ref{F_1}, these configurations are visually differentiated through distinct markers: circles represent a single model, crosses for dual-model collaboration, triangles for three models, and squares for the four-model setup. This visual representation effectively highlights the positive impact of collaborative strategies on boosting model accuracy throughout the various training phases.

\begin{figure}[h]
    \centering
    \includegraphics[width=0.5\textwidth]{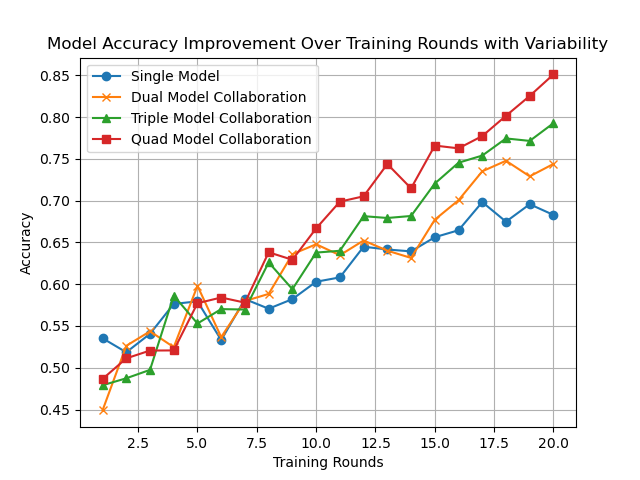}
    \caption{Impact of Model Collaboration on Accuracy}
    \label{F_1}
\end{figure}

In our research illustrated in Fig. \ref{Fi_2}, we explored privacy performance by comparing various model collaboration setups: a single small model, and dual, triple, and quad-model collaborations. The study's results, plotted with the number of training rounds on the x-axis and privacy loss on the y-axis, revealed a significant trend: as the number of collaboratively working models increased, privacy loss consistently decreased, especially in the later stages of training. This finding underscores the efficacy of multi-model collaboration and advanced layered techniques in enhancing privacy safeguards. For clarity in comparison, we used distinct markers in the graph—circles for a single model, crosses for dual models, triangles for triple models, and squares for quad models. This visual representation, as shown in Fig. \ref{Fi_2}, provides robust evidence of the substantial improvement in privacy performance that can be achieved by increasing the number of small models working together and integrating layered technological approaches.
\begin{figure}[h]
    \centering
    \includegraphics[width=0.5\textwidth]{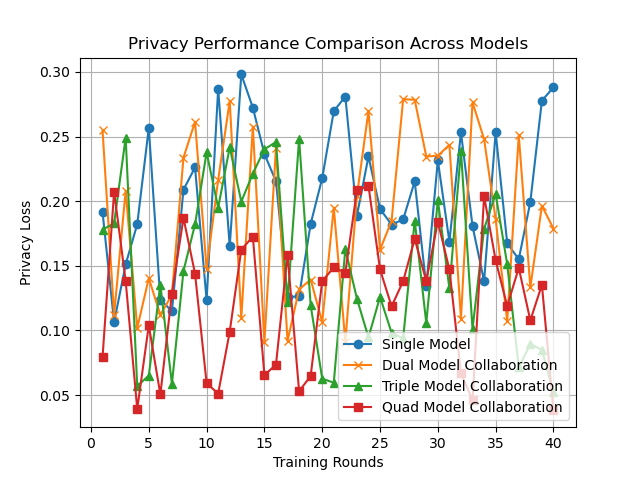}
    \caption{Privacy Preservation in Multi-Model Collaboration}
    \label{Fi_2}
\end{figure}
\subsection{Comparative Analysis of AI Model Configurations}

In our study, depicted in Fig. \ref{Fi_3}, we conducted a comprehensive comparison across various AI model configurations, examining key metrics such as computational requirements, resource consumption, and accuracy. The models are categorized into three types: a standalone large-scale model, individual small-scale models, and a collaborative ensemble of small-scale models. This categorization provides a clear insight into the trade-offs between resource efficiency and model performance.

\subsubsection{Metrics Evaluated}

\textbf{Computational \& Resource Requirements}: This metric measures the computational burden and resource usage of the models. The standalone large-scale model showed significantly higher demands compared to the other configurations, with individually operating small models and the collaborative ensemble being more resource-efficient.

\textbf{Accuracy}: As a vital measure of model performance, we found that the large-scale model has high accuracy, with collaborative small-scale models closely matching its performance. In contrast, individual small models exhibited slightly lower accuracy.

The study presents a key observation: through strategic collaborative approaches among models, it's possible to match or even exceed the performance of a standalone large-scale model while conservatively utilizing resources. This finding offers a promising approach for AI model deployment in settings with limited resources.

\begin{figure}[h]
\centering
\includegraphics[width=0.5\textwidth]{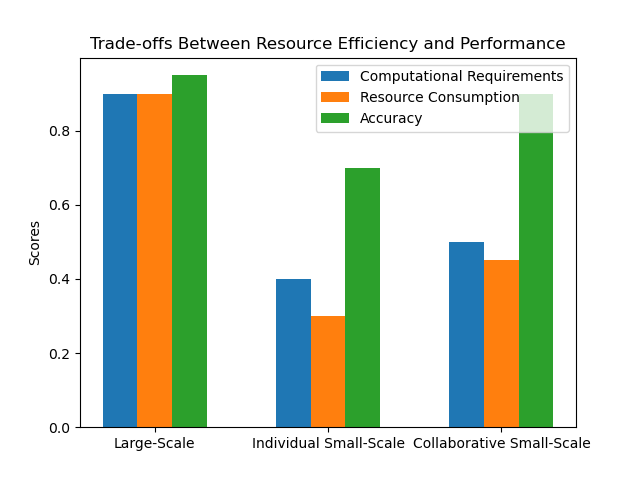}
\caption{Trade-offs Between Resource Efficiency and Performance Across AI Model Configurations}
\label{Fi_3}
\end{figure}

Our findings demonstrate the significant advantages of federated layering and small model collaboration in a 6G network context, achieving enhanced accuracy, improved privacy, and optimal resource utilization.

\subsection{Security analysis of FLT}
In our investigation, we benchmarked our algorithm against renowned methods including Isolation Forest \cite{li2021similarity}, LOF \cite{yang2020anomaly}, FL-MGVN \cite{wu2022fl}, and DÏot \cite{nguyen2019diot}, offering a comprehensive comparative analysis.  

Our analysis in Fig. \ref{ADAC} showcases the proficiency of different algorithms in accurately detecting anomalies within a federated learning network. This comparison is pivotal for ensuring the network's integrity and reliability. The graph illustrates the efficiency of each algorithm in distinguishing true anomalies from false positives over various training rounds. The accuracy trajectory of each algorithm, depicted through their trend lines, offers insights into their effectiveness in anomaly detection. This result underscores our algorithm's enhanced capability in anomaly detection accuracy, a key aspect in ensuring the credibility and operational efficacy of federated learning systems.

In Fig. \ref{ADC}, we focus on the comparative model accuracy achieved by different anomaly detection algorithms. Model accuracy is a critical measure of an algorithm's ability to correctly classify data while reliably identifying anomalies. The graph depicts how model accuracy evolves over training rounds for each algorithm. Superior accuracy indicates that an algorithm effectively balances anomaly detection with the preservation of learning process integrity. This graphical representation is instrumental in evaluating the overarching effectiveness of each algorithm in the federated learning context, highlighting our proposed algorithm's superior performance in maintaining high model accuracy alongside robust anomaly detection.

Fig. \ref{LA} examines the latency of anomaly detection across various federated nodes. Timeliness in anomaly detection, measured by detection latency, is crucial, especially in dynamic environments requiring prompt responses. The graph compares the responsiveness of each algorithm, with lower latency being indicative of quicker anomaly detection. This comparison is essential in assessing the real-time efficiency of our algorithm against existing methods, demonstrating its effectiveness in swiftly addressing anomalous occurrences, thus maintaining the integrity and performance of AI models in urgent federated learning scenarios.

Our study, highlighted in Fig. \ref{MA}, delves into the overall impact of integrating anomaly detection algorithms on the performance of federated learning systems. This comparison is vital for gauging the influence of anomaly detection on the efficiency and accuracy of models within the network. The graph demonstrates how each algorithm, particularly ours, navigates the balance between effective anomaly detection and enhancing model accuracy across training rounds. This analysis aims to showcase our anomaly detection algorithm's dual capability in identifying irregularities and positively influencing the model's learning accuracy, thus bolstering the overall system performance in federated learning applications where precision and effective anomaly detection are paramount.

In conclusion, FLT's superiority in efficiency, resource utilization, and security solidifies its appropriateness for advanced 6G environments, making it a robust solution in the evolving landscape of federated learning.

\begin{figure}[htbp]
	\centering		
	\includegraphics[width=0.5\textwidth]{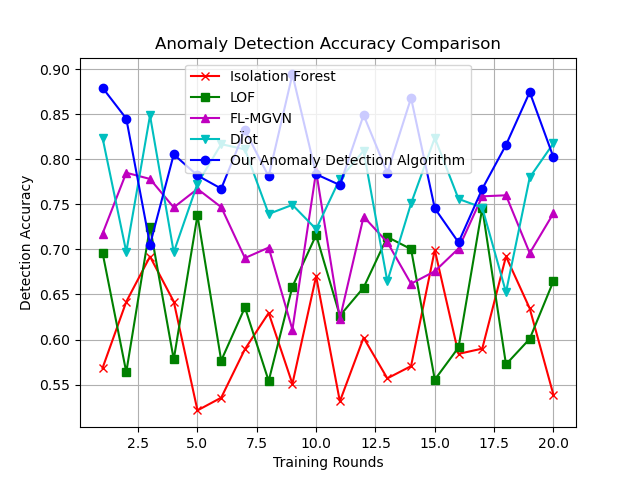}
	\centering
	\caption{Anomaly Detection Accuracy in Federated Learning}\label{ADAC}
\end{figure}
\begin{figure}[htbp]
	\centering		
	\includegraphics[width=0.5\textwidth]{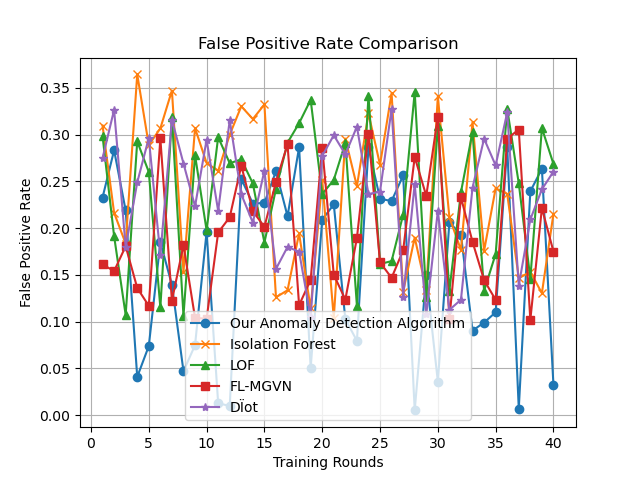}
	\centering
	\caption{Accuracy with Different Anomaly Detection Algorithms}\label{ADC}
\end{figure}
\begin{figure}[htbp]
	\centering		
	\includegraphics[width=0.5\textwidth]{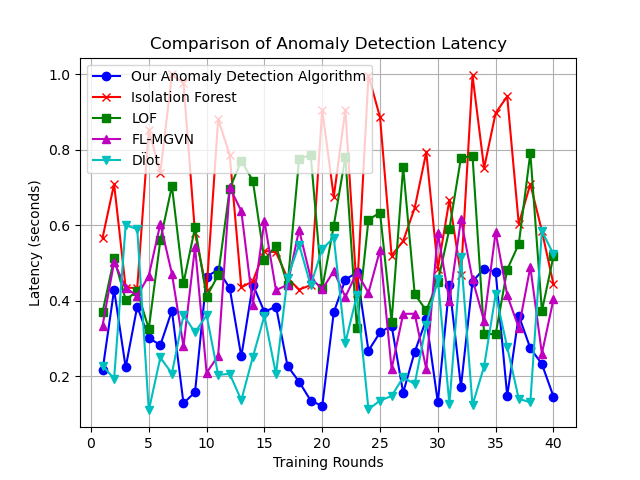}
	\centering
	\caption{Anomaly Detection Latency Across Federated Learning Nodes}\label{LA}
\end{figure}
\begin{figure}[htbp]
	\centering		
	\includegraphics[width=0.5\textwidth]{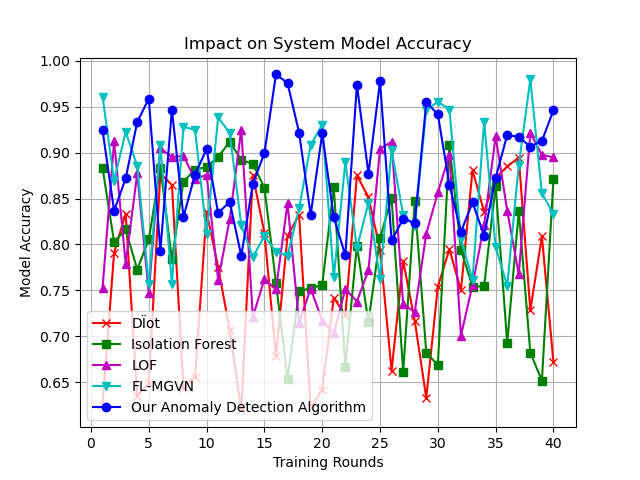}
	\centering
	\caption{Impact of Anomaly Detection on Federated Learning System Performance}\label{MA}
\end{figure}

\subsection{Experimental Validation and Future Perspectives}

The research conducted in this paper is supported by experimental results that demonstrate significant progress in learning efficiency, reasoning accuracy, and privacy protection. The strategies presented not only address current technological challenges but also provide valuable guidance and insights for the future development of information technology. As 6G communication networks continue to mature and proliferate, the proposed technologies are anticipated to find broader applications, fostering technological progress and transformation across various societal domains.

 \section{Conclusions}\label{sec:conclusion}

This paper explores the enhancement of QoS in edge computing by integrating FLT to improve the resilience and efficiency of AI lifelong learning systems. We introduce an innovative approach that enables collaborative interactions among small AI models within a federated layering framework, optimizing their performance in resource-constrained edge environments. This synergy between cloud and edge computing significantly boosts operational efficiency and decision-making efficacy. Our research highlights three key contributions: the utilization of FLT for efficient AI model collaboration, a synergistic cloud-edge architecture that supports resilient lifelong learning systems, and advanced privacy-preserving mechanisms tailored for federated learning in edge settings. Experimental validations confirm the effectiveness of our strategies in enhancing learning efficiency, inference accuracy, and privacy protection, addressing critical challenges in edge computing and AI technologies. This study not only contributes to the academic and practical realms of AI, edge computing, and federated learning but also lays a foundational framework for future advancements in resilient and efficient lifelong learning systems within the evolving landscape of edge computing and 6G networks.

\bibliographystyle{unsrtnat}
\bibliography{my}{}

\end{document}